\providecommand{\U}[1]{\protect\rule{.1in}{.1in}}
\newtheorem{proposition}{Proposition}
\newenvironment{proof}[1][Proof]{\textbf{#1.} }{\  \rule{0.5em}{0.5em}}
\begin{document}

\title{Attention and Self-Attention in Random Forests}
\author{Lev V. Utkin and Andrei V. Konstantinov\\Peter the Great St.Petersburg Polytechnic University\\St.Petersburg, Russia\\e-mail: lev.utkin@gmail.com, andrue.konst@gmail.com}
\date{}
\maketitle

\begin{abstract}
New models of random forests jointly using the attention and self-attention
mechanisms are proposed for solving the regression problem. The models can be
regarded as extensions of the attention-based random forest whose idea stems
from applying a combination of the Nadaraya-Watson kernel regression and the
Huber's contamination model to random forests. The self-attention aims to
capture dependencies of the tree predictions and to remove noise or anomalous
predictions in the random forest. The self-attention module is trained jointly
with the attention module for computing weights. It is shown that the training
process of attention weights is reduced to solving a single quadratic or
linear optimization problem. Three modifications of the general approach are
proposed and compared. A specific multi-head self-attention for the random
forest is also considered. Heads of the self-attention are obtained by
changing its tuning parameters including the kernel parameters and the
contamination parameter of models. Numerical experiments with various datasets
illustrate the proposed models and show that the supplement of the
self-attention improves the model performance for many datasets.

Keywords: attention mechanism, random forest, Nadaraya-Watson regression,
quadratic programming, linear programming, contamination model, regression

\end{abstract}

\section{Introduction}

The attention mechanism is an effective method for improving the performance
of neural networks. It was proposed to enhance the natural language processing
models, and later becomes one of the most effective methods for various
machine learning tasks. A neural network with attention-based components can
automatically distinguish the relative importance of features or examples by
means of assigning the corresponding weights to them to get a higher
classification or regression accuracy. The main idea behind the attention
mechanism stems from the human perception property to concentrate on an
important part of information and to ignore other information
\cite{Niu-Zhong-Yu-21}. Due to the ability of the attention to significantly
improve the neural network performance, a huge amount of attention-based
models have been developed to be used in various applications and tasks
\cite{Niu-Zhong-Yu-21,Chaudhari-etal-2019,Correia-Colombini-21a,Correia-Colombini-21,Lin-Wang-etal-21}%
.

In spite of success of many neural attention models in solving various
application tasks, attention is a component of neural architectures
\cite{Chaudhari-etal-2019}. This implies that the attention weights are
learned by incorporating an additional feed forward neural network within the
architectures. The corresponding models meet difficulties of neural networks,
including, overfitting, many tuning parameters, requirements of a large amount
of data, the black-box nature, expensive computations. Moreover, tabular
learning data may be also an important problem encountered with neural
networks. There are several deep learning models
\cite{Arik-Pfister-21,Katzir-etal-21,Somepalli-etal-21} illustrating
efficiency on some tabular datasets. However, other experimental studies
\cite{Borisov-etal-21,Shwartz-Amitai-22} give opportunity to conclude that
ensemble-based models using decision trees as weak learners mainly outperform
deep neural networks when they deal with heterogeneous tabular data.

Taking the above into account, Utkin and Konstantinov
\cite{Konstantinov-Utkin-22d,Utkin-Konstantinov-22a} proposed a new model
called the attention-based random forest (ABRF) by incorporating the attention
mechanism into ensemble-based models such as random forests (RF)
\cite{Breiman-2001} and the gradient boosting machine
\cite{Friedman-2001,Friedman-2002}. The original RF is a powerful model which
consists of a large number of randomly built individual decision trees such
that their predictions are combined, for example, by means of the simple
averaging. Decision trees are built by the random selection of different
subsamples of examples from training data and different subsamples of the
feature space. The main idea behind the proposed ABRF models stems from the
Nadaraya-Watson kernel regression model \cite{Nadaraya-1964,Watson-1964}, but
attention weights used in the Nadaraya-Watson regression are assigned to
decision trees in a specific way. The weights can be regarded as the attention
weights because they are defined by using queries, keys and values concepts in
terms of the attention mechanism. In contrast to weights of trees defined in
\cite{Utkin-Kovalev-Coolen-2020,Utkin-Kovalev-Meldo-2019}, weights in ABRF
have trainable parameters and depend on how far an example, which falls into a
leaf of a tree, is from examples which fall into the same leaf. The resulting
prediction of ABRF is computed as a weighted sum of the tree predictions.

Three modifications of ABRF were studied in
\cite{Konstantinov-Utkin-22d,Utkin-Konstantinov-22a}. The first modification
called $\epsilon$-ABRF uses the Huber's $\epsilon$-contamination model
\cite{Huber81} for defining the attention weights. Each weight consists of two
parts: the softmax operation with the tuning coefficient $1-\epsilon$ and the
trainable bias of the softmax weight with coefficient $\epsilon$. An important
advantage of $\epsilon$-ABRF is that attention weights of trees linearly
depend on trainable parameters. This property leads to solving the standard
quadratic optimization problem which is simply solved. $\epsilon$-ABRF avoids
using the gradient-based algorithm to compute optimal trainable parameters.
Other two ABRF modifications differ from $\epsilon$-ABRF by the additional
trainable attention parameters incorporated into the softmax operation. These
modifications require to apply the gradient-based algorithms to compute
optimal attention parameters.

$\epsilon$-ABRF has demonstrated outperforming results for many real datasets.
However, an \textquotedblleft unfortunate\textquotedblright\ selection of a
subset of training examples for building a tree can lead to anomalous or
incorrect predictions which bias the RF prediction. In order to overcome this
disadvantage and following the idea behind $\epsilon$-ABRF, we propose to
supplement this model by the self-attention mechanism \cite{Vaswani-etal-17}
which aims to capture dependencies of the tree predictions and to remove noise
or anomalous predictions in $\epsilon$-ABRF. The proposed model is called
SAT-RF (self-attention-attention-based RF) The main peculiarity of the
supplemented self-attention is that it is trained jointly with the attention
mechanism, but not sequentially, i.e., we solve a single optimization problem
for simultaneous computing trainable parameters of $\epsilon$-ABRF and the
supplemented self-attention. As a result, both the mechanisms impact each
other and can be regarded as a joint attention-based modification. The use of
the Huber's $\epsilon$-contamination model with parameter $\epsilon$ different
from $\epsilon$-ABRF for defining the self-attention leads to the quadratic or
linear optimization problem with trainable parameters of $\epsilon$-ABRF and
the supplemented self-attention as optimization variables. This is an
important property of the proposed model. Moreover, we proposed a specific
variant of the multi-head self-attention which allows us to combine knowledge
of the self-attention via different representation of its tuning parameters.
It is shown that the multi-head self-attention also leads to the quadratic or
linear optimization problem for computing trainable parameters of all heads.

Our contributions can be summarized as follows:

\begin{enumerate}
\item A new attention-based RF model is proposed. According to the model, the
trainable self-attention mechanism is incorporated into the attention-based RF
as an additional component to capture dependencies of the tree predictions and
to remove noise or anomalous predictions in $\epsilon$-ABRF. It is important
that the self-attention and attention components are jointly trained such that
trainable parameters of the attention impact on parameters of the
self-attention and vice versa.

\item Three modifications of the self-attention is studied. The first one
(SAT-RF-y), is based on comparison of predictions provided by pairs of trees.
The second modification (SAT-RF-x) takes into account only distances between
mean feature vectors which are determined from all feature vectors which fall
into the same leaves with the tested example in pairs of trees. The third
modification (SAT-RF-yx) can be regarded as a combination of the first and the
second modifications.

\item A specific multi-head self-attention for the RF is proposed. Heads are
obtained by changing the tuning parameters of the self-attention. They are
trained by solving a single quadratic optimization problem for computing the
optimal attention and self-attention weights. It can be said that the whole
model is trained \textquotedblleft end-to-end\textquotedblright\ to some extent.

\item Various numerical experiments with real tabular datasets are provided to
justify SAT-RF, to study its peculiarities and to compare it with original
RFs. Moreover, we investigate two types of RFs: original RFs and Extremely
Randomized Trees (ERT). At each node, the ERT algorithm chooses a split point
randomly for each feature and then selects the best split among these
\cite{Geurts-etal-06}.
\end{enumerate}

The paper is organized as follows. Related work can be found in Section 2. A
brief introduction to the attention and self-attention mechanisms is given in
Section 3. A general approach to incorporating the attention and the
self-attention into the RF is provided in Section 4. Analysis of the attention
and self-attention representations by means of the Huber's $\epsilon
$-contamination model is given in Section 5. Some questions of applying the
multi-head self-attention in the framework of the general approach is
considered in Section 6. Numerical experiments with real data illustrating the
efficiency of the proposed models for solving the regression problems are
provided in Section 7. Concluding remarks can be found in Section 8.

\section{Related work}

\textbf{Attention mechanism}. Many attention-based models have been developed
to improve the performance of classification and regression algorithms.
Surveys of various attention-based models are available in
\cite{Niu-Zhong-Yu-21,Chaudhari-etal-2019,Correia-Colombini-21a,Correia-Colombini-21,Lin-Wang-etal-21,Liu-Huang-etal-21}%
.

It should be noted that one of the computational problems of attention
mechanisms is training through the softmax function. In order to overcome this
difficulty, several interesting approaches have been proposed. Choromanski et
al. \cite{Choromanski-etal-21} introduced Performers as a Transformer
architecture which can estimate softmax attention with provable accuracy using
only linear space and time complexity. A linear unified nested attention
mechanism that approximates softmax attention with two nested linear attention
functions was proposed by Ma et al. \cite{Ma-Kong-etal-21}. A new class of
random feature methods for linearizing softmax and Gaussian kernels called
hybrid random features was introduced in \cite{Choromanski-etal-21a}. The same
problem is solved in \cite{Peng-Pappas-etal-21} where the authors propose
random feature attention, a linear time and space attention that uses random
feature methods to approximate the softmax function. Schlag et al.
\cite{Schlag-etal-2021} proposed a new kernel function to linearize attention
which balances simplicity and effectiveness. A detailed survey of techniques
of random features to speed up kernel methods was provided by Liu et al.
\cite{Liu-Huang-etal-21}.

\textbf{Self-attention}. The self-attention was proposed by Vaswani et al.
\cite{Vaswani-etal-17} as an important component of a new neural network
architecture known as Transformer. It is inspired by the previous works
presented by Cheng et al. \cite{ChengDong-Lapata-16}, where self-attention is
called intra-attention, by Parikh et al. \cite{Parikh-etal-16}. The
self-attention aims to capture token dependencies and to relate distinct
positions in the input sequence. It has been used in many tasks, for example,
sentence embedding \cite{Lin-Feng-etal-17}, in machine translation and natural
language processing \cite{Dai-Yang-etal-19,Devlin-etal-18,Wu-Fan-etal-19}, in
speech recognition \cite{Povey-etal-18,Shim-Choi-Sung-22,Vyas-etal-20}, in
image recognition
\cite{Chen-Xie-etal-20,Guo-Liu-etal-21,Khan-Naseer-etal-22,Liu-Lin-etal-21,Ramachandran-etal-19,Shen-Bello-etal-20,Wang-Jiang-etal-17,Wang-Girshick-etal-18,Zhao-Jia-Koltun-20}%
.

Many survey papers have been devoted to various aspects and applications of
attention and self-attention mechanisms, for example,
\cite{Chaudhari-etal-2019,Lin-Wang-etal-21,Khan-Naseer-etal-22,Brauwers-Frasincar-22,Goncalves-etal-2022,Hassanin-etal-2022,Santana-Colombini-21,Soydaner-22,Xu-Wei-etal-22}%
.

We use self-attention to remove anomalies in the tree predictions. Similar
approaches to image denoising were considered in
\cite{Li-Hsu-etal-20,Tian-Fei-etal-20,Vidal-22,Yu-Nie-etal-21,Zuo-Chen-etal-22}%
.

It should be noted that the above methods are implemented as a part of a
neural network, and they are not studied for application to other machine
learning models, for example, to RFs.

\textbf{Weighted RFs}. Many models were developed and studied to incorporate
weights of trees into RFs. They can be divided into two groups. Models from
the first group are based on assigning weights to decision trees in accordance
with some criteria to improve the classification and regression models
\cite{Kim-Kim-Moon-Ahn-2011,Li-Wang-Ding-Dong-2010,Ronao-Cho-2015,Winham-etal-2013,Xuan-etal-18,Zhang-Wang-21}%
. For example, a model proposed in \cite{Daho-2014} uses weights of classes to
deal with imbalanced datasets. However, the assigned weights in models from
the first group are not trainable parameters. They can be viewed as tuning
parameters. Attempts to train weights of trees were carried out in
\cite{Utkin-Kovalev-Coolen-2020,Utkin-Kovalev-Meldo-2019,Utkin-Konstantinov-etal-20,Utkin-etal-2019}%
, where weights are assigned by solving optimization problems, i.e., they are
incorporated into a certain loss function of the whole RF such that the loss
function is minimized over values of weights. Another approach was proposed in
\cite{Utkin-Konstantinov-22a}. In contrast to the aforementioned models, this
approach is based on using the attention mechanism and weights assigned to
trees depend not only on trees, but on each example. These weights can be
regarded as attention weights. Similar attention-based model for the gradient
boosting machine was proposed in \cite{Konstantinov-Utkin-22d}

\section{Preliminaries}

\subsection{Attention mechanism as the Nadaraya-Watson regression}

The attention mechanism can be regarded as a tool by which a neural network
can automatically distinguish the relative importance of features and weigh
the features for enhancing the classification accuracy. It can be viewed as a
learnable mask which emphasizes relevant information in a feature map. It is
pointed out in \cite{Chaudhari-etal-2019,Zhang2021dive} that the original idea
of attention can be understood from the statistical point of view applying the
Nadaraya-Watson kernel regression model \cite{Nadaraya-1964,Watson-1964}.

Given $n$ examples $S=\{(\mathbf{x}_{1},y_{1}),(\mathbf{x}_{2},y_{2}%
),...,(\mathbf{x}_{n},y_{n})\}$, in which $\mathbf{x}_{i}=(x_{i1}%
,...,x_{im})\in\mathbb{R}^{m}$ represents a feature vector involving $m$
features and $y_{i}\in\mathbb{R}$ represents the regression outputs, the task
of regression is to construct a regressor $f:\mathbb{R}^{m}\rightarrow
\mathbb{R}$ which can predict the output value $\tilde{y}$ of a new
observation $\mathbf{x}$, using available data $S$. The similar task can be
formulated for the classification problem.

The original idea behind the attention mechanism is to replace the simple
average of outputs $\tilde{y}=n^{-1}\sum_{i=1}^{n}y_{i}$ for estimating the
regression output $y$, corresponding to a new input feature vector
$\mathbf{x}$ with the weighted average, in the form of the Nadaraya-Watson
regression model \cite{Nadaraya-1964,Watson-1964}:%
\begin{equation}
\tilde{y}=\sum_{i=1}^{n}\alpha(\mathbf{x},\mathbf{x}_{i})y_{i},
\end{equation}
where weight $\alpha(\mathbf{x},\mathbf{x}_{i})$ conforms with relevance of
the $i$-th example to the vector $\mathbf{x}$.

According to the Nadaraya-Watson regression model, to estimate the output $y$
for an input variable $\mathbf{x}$, training outputs $y_{i}$ given from a
dataset weigh in agreement with the corresponding input $\mathbf{x}_{i}$
locations relative to the input variable $\mathbf{x}$. The closer an input
$\mathbf{x}_{i}$ to the given variable $\mathbf{x}$, the greater the weight
assigned to the output corresponding to $\mathbf{x}_{i}$.

One of the original forms of weights is defined by a kernel $K$ (the
Nadaraya-Watson kernel regression \cite{Nadaraya-1964,Watson-1964}), which can
be regarded as a scoring function estimating how vector $\mathbf{x}_{i}$ is
close to vector $\mathbf{x}$. The weight is written as follows:
\begin{equation}
\alpha(\mathbf{x},\mathbf{x}_{i})=\frac{K(\mathbf{x},\mathbf{x}_{i})}%
{\sum_{j=1}^{n}K(\mathbf{x},\mathbf{x}_{j})}.
\end{equation}

In particular, If to use the Gaussian kernel, then weights are of the form:
\begin{equation}
\alpha(\mathbf{x},\mathbf{x}_{i})=\text{\textrm{softmax}}\left(
-\frac{\left\Vert \mathbf{x}-\mathbf{x}_{i}\right\Vert ^{2}}{2\tau}\right)  ,
\end{equation}
where $\tau$ is the tuning parameter.

In terms of the attention mechanism \cite{Bahdanau-etal-14}, vector
$\mathbf{x}$, vectors $\mathbf{x}_{i}$ and outputs $y_{i}$ are called as the
\textit{query}, \textit{keys} and \textit{values}, respectively. Weight
$\alpha(\mathbf{x},\mathbf{x}_{i})$ is called as the attention weight.

Generally, weights $\alpha(\mathbf{x},\mathbf{x}_{i})$ can be extended by
incorporating trainable parameters.

Several definitions of attention weights and attention mechanisms have been
proposed. The most popular definitions are the additive attention
\cite{Bahdanau-etal-14}, the multiplicative or dot-product attention
\cite{Luong-etal-2015,Vaswani-etal-17}.

\subsection{Self-attention mechanism as the non-local means denoising}

One of the interesting interpretations of the self-attention mechanism is the
non-local means denoising \cite{Vidal-22}, which aims to remove noise in an
image by computing average intensity of each pixel from a set of neighboring
pixels. This idea again stems from the Nadaraya-Watson regression under
condition that the query is a key, and each key coincides with the
corresponding value.

According to \cite{Vidal-22}, intensity of a pixel with coordinates
$\mathbf{x}$ by using the non-local means denoising is determined as follows:
\begin{equation}
f(\mathbf{x})=y^{\ast}=\sum_{i=1}^{n}\beta(y,y_{i})y_{i},
\end{equation}
where weight $\beta(y,y_{i})$ is determined as
\begin{equation}
\beta(y,y_{i})=\frac{K(y,y_{i})}{\sum_{j=1}^{n}K(y,y_{j})}.
\end{equation}

If to use the Gaussian kernel, then weights are of the form:
\begin{equation}
\beta(y,y_{i})=\text{\textrm{softmax}}\left(  -\frac{\left(  y-y_{i}\right)
^{2}}{2\kappa}\right)  ,
\end{equation}
where $\kappa$ is the tuning or training parameter.

SAT-RF with the above definition of the softmax operation is called SAT-RF-y.
Generally, the query $f(\mathbf{x})$ and values $y_{i}$ can be vectors.
Moreover, the values can be taken in another form. Variants of the forms are
considered below.

\section{Self-attention-based random forest}

The regression problem is to construct a regression function $f$ such that
$y_{i}=f(\mathbf{x}_{i},\theta)+\xi$, where $\xi$ is the random noise with
expectation $0$ and a finite variance; $\theta$ is a set of trainable
parameter; $(\mathbf{x}_{i},y_{i})$ is the $i$-th example from the training
set $S$, $i=1,...,n$. In a simple case, $f(\mathbf{x,}\theta)$ minimizes the
expected error, for example, $n^{-1}\sum_{i=1}^{n}\left(  y_{i}-f(\mathbf{x}%
_{i}\mathbf{,}\theta)\right)  ^{2}$ over $\theta$.

One of the powerful machine learning models handling with tabular data is the
RF which can be regarded as an ensemble of $T$ decision trees such that each
tree is trained on a subset of examples randomly selected from the training
set. In the original RF, the final RF prediction $\tilde{y}$ for a testing
example $\mathbf{x}$ is determined by averaging predictions $\tilde{y}%
_{1},...,\tilde{y}_{T}$ obtained for all trees.

Denote an index set of examples which fall into the $i$-th leaf in the $k$-th
tree as $\mathcal{J}_{i}^{(k)}$ such that $\mathcal{J}_{i}^{(k)}%
\cap\mathcal{J}_{j}^{(k)}=\varnothing$ because the same example cannot fall
into different leaves of the same tree. Let us consider an example
$\mathbf{x}$ which falls into $i$-th leaf in the $k$-th tree. Then we can
introduce the mean vector $\mathbf{A}_{k}(\mathbf{x)}$ defined as the mean of
training vectors $\mathbf{x}_{j}$ which fall into the $i$-th leaf of the
$k$-th tree, i.e., into the leaf where vector $\mathbf{x}$ felt into. In the
same way, we introduce the mean target value $B_{k}(\mathbf{x)}$ defined as
the mean of $y_{j}$ such that $j\in\mathcal{J}_{i}^{(k)}$. In fact, value
$B_{k}(\mathbf{x)}$ in regression coincides with the prediction of the $k$-th
tree. Formally, we write
\begin{equation}
\mathbf{A}_{k}(\mathbf{x)}=\frac{1}{\#\mathcal{J}_{i}^{(k)}}\sum
_{j\in\mathcal{J}_{i}^{(k)}}\mathbf{x}_{j}, \label{RF_Att_20}%
\end{equation}%
\begin{equation}
B_{k}(\mathbf{x)}=\frac{1}{\#\mathcal{J}_{i}^{(k)}}\sum_{i\in\mathcal{J}%
_{j}^{(k)}}y_{j}. \label{RF_Att_21}%
\end{equation}

By returning to the Nadaraya-Watson regression and notation of the attention
mechanism framework, the set of $\mathbf{A}_{k}(\mathbf{x)}$, $k=1,...,T$, can
be regarded as a set of keys for every $\mathbf{x}$, the set of $B_{k}%
(\mathbf{x)}$ can be regarded as a set of values. This implies that the final
prediction $\tilde{y}$ of the RF can be computed by using the Nadaraya-Watson
regression, namely,
\begin{equation}
\tilde{y}=f(\mathbf{x},\mathbf{w})=\sum_{k=1}^{T}\alpha\left(  \mathbf{x}%
,\mathbf{A}_{k}(\mathbf{x)},w_{k}\right)  \cdot B_{k}(\mathbf{x)}.
\label{RF_Att_47}%
\end{equation}

Here $\alpha\left(  \mathbf{x},\mathbf{A}_{k}(\mathbf{x)},w_{k}\right)  $ is
the attention weight with vector $\mathbf{w}=(w_{1},...,w_{T})$ of trainable
parameters assigned to the $k$-th tree. One can see that the set of parameters
$\theta$ is replaced with $\mathbf{w}$. If $\alpha$ is the normalized kernel,
then it is defined through the distance between $\mathbf{x}$ and
$\mathbf{A}_{k}(\mathbf{x)}$, which is defined, for instance, by means of
$L_{2}$-norm $\left\Vert \mathbf{x}-\mathbf{A}_{k}(\mathbf{x)}\right\Vert
^{2}$. It is assumed that
\begin{equation}
\sum_{k=1}^{T}\alpha\left(  \mathbf{x},\mathbf{A}_{k}(\mathbf{x)}%
,w_{k}\right)  =1, \label{RF_Att_101}%
\end{equation}%
\begin{equation}
\sum_{k=1}^{T}w_{k}=1. \label{RF_Att_102}%
\end{equation}

Condition (\ref{RF_Att_101}) is due to properties of the attention weights in
the Nadaraya-Watson regression. Condition (\ref{RF_Att_102}) is explained
below when the Huber's $\epsilon$-contamination model will be considered for
representing the attention weights.

The above approach to incorporating the attention mechanism into the RF has
been proposed in \cite{Konstantinov-Utkin-22d,Utkin-Konstantinov-22a}. Our aim
now is to supplement it with the self-attention.

We suppose that there may be anomalies among values $B_{k}(\mathbf{x})$ or
$\tilde{y}_{k}$. In order to cope with the anomalies, we apply the
self-attention mechanism which corrects every $\tilde{y}_{k}$. According to
the self-attention, each $\tilde{y}_{i}$ can be recalculated as follows:
\begin{equation}
y_{j}^{\ast}=\sum_{i=1}^{T}\beta\left(  \tilde{y}_{j},\tilde{y}_{i}%
,v_{i}\right)  \cdot\tilde{y}_{i}. \label{RF_Att_121}%
\end{equation}
\bigskip

Here $\beta\left(  \tilde{y}_{i},\tilde{y}_{k},v_{k}\right)  $ is the
self-attention weight with vector $\mathbf{v}=(v_{1},...,v_{T})$ of trainable
parameters assigned to the $k$-th tree such that $\sum_{k=1}^{T}v_{k}=1$.

The main idea behind the approach is to use the attention and self-attention
simultaneously. Let us substitute (\ref{RF_Att_121}) into (\ref{RF_Att_47})
under condition $B_{k}(\mathbf{x)=}\tilde{y}_{k}$ as
\begin{equation}
\tilde{y}=f(\mathbf{x},\mathbf{w},\mathbf{v})=\sum_{i=1}^{T}\sum_{k=1}%
^{T}\alpha\left(  \mathbf{x},\mathbf{A}_{i}(\mathbf{x)},w_{i}\right)
\cdot\beta\left(  \tilde{y}_{i},\tilde{y}_{k},v_{k}\right)  \cdot\tilde{y}%
_{k}. \label{RF_Att_123}%
\end{equation}

We get the trainable attention-based RF with parameters $\mathbf{w}$ and
$\mathbf{v}$, which are defined by minimizing the expected loss function over
set $\mathcal{W}$ and set $\mathcal{V}$ of parameters, respectively, as
follows:
\begin{equation}
(\mathbf{w}_{opt},\mathbf{v}_{opt})=\arg\min_{\mathbf{w\in}\mathcal{W}%
,\ \mathbf{v}\in\mathcal{V}}~\sum_{s=1}^{n}L\left(  \tilde{y}_{s}%
,y_{s},\mathbf{w},\mathbf{v}\right)  . \label{RF_Att_49}%
\end{equation}

The loss function can be rewritten as
\begin{align}
&  \sum_{s=1}^{n}L\left(  \tilde{y}_{s},y_{s},\mathbf{w},\mathbf{v}\right)
\nonumber\\
&  =\sum_{s=1}^{n}\left(  y_{s}-\sum_{i=1}^{T}\sum_{k=1}^{T}\alpha\left(
\mathbf{x},\mathbf{A}_{i}(\mathbf{x)},w_{i}\right)  \cdot\beta\left(
\tilde{y}_{i},\tilde{y}_{k},v_{k}\right)  \cdot\tilde{y}_{k}\right)  ^{2}.
\label{RF_Att_50}%
\end{align}

Optimal trainable parameters $\mathbf{w},\mathbf{v}$ are computed depending on
forms of attention weights $\alpha$ and self-attention weights $\beta$.
Moreover, the computation time for solving the optimization problem
(\ref{RF_Att_50}) also significantly depends on the weights. Therefore, we
propose the form which leads to convex quadratic optimization problem.

It can be seen from the above that every value $y$ is transformed to $y^{\ast
}$ in accordance with the difference between $y$ and other values $y_{i}$.
However, the above non-local means denoising does not take into account the
distance between the vectors $A_{i}(\mathbf{x})$ and $A_{j}(\mathbf{x})$. In
other words, it is interesting to take into account how the mean feature
vector of all feature vectors which fall into the same leaves with
$\mathbf{x}$ of the $i$-th and the $j$-th trees, respectively. Hence, we can
write the self-attention weight as
\begin{equation}
\beta(A_{i},A_{j})=\text{\textrm{softmax}}\left(  -\frac{\left\Vert
A_{i}(\mathbf{x})-A_{j}(\mathbf{x})\right\Vert ^{2}}{2\kappa}\right)  ,
\end{equation}

SAT-RF with the above definition of the softmax operation is called SAT-RF-x.

By intuition, if $A_{i}(\mathbf{x})$ and $A_{j}(\mathbf{x})$ are close to each
other, then we can expect that the difference between values $y_{i}$ and
$y_{j}$ is small. If it is large, then the weight of $y_{i}$ should be larger
than in the case when the difference between values $y_{i}$ and $y_{j}$ is
small. On the contrary, if $A_{i}(\mathbf{x})$ and $A_{j}(\mathbf{x})$ are far
from each other, then the impact of value $y_{j}$ is reduced and the
corresponding weight should be decreased even if the difference between values
$y_{i}$ and $y_{j}$ is small. The above reasoning leads to applying the
following self-attention weights:
\begin{equation}
\beta(y_{i},y_{j})=\text{\textrm{softmax}}\left(  -\frac{\left(  y_{i}%
-y_{j}\right)  ^{2}}{2\kappa\left\Vert A_{i}(\mathbf{x})-A_{j}(\mathbf{x}%
)\right\Vert ^{2}}\right)  ,
\end{equation}

SAT-RF with the above definition of the softmax operation is called SAT-RF-yx.

It should be pointed out that the modifications of SAT-RF do not impact on the
general approach, and they define only the softmax operations. Therefore, all
expressions will be given using the first modifications, but results of
numerical experiments will be considered for every modification.

\section{Self-attention and the Huber's contamination model}

To simplify computations and to get a unique solution for $\mathbf{w}$, we
propose to use the well-known Huber's $\epsilon$-contamination model
\cite{Huber81} which can be represented as follows:%
\begin{equation}
(1-\epsilon)\cdot P+\epsilon\cdot Q, \label{RF_Att_40}%
\end{equation}
where the probability distribution $P$ is contaminated by some arbitrary
distribution $Q$; the rate $\epsilon\in\lbrack0,1]$ is a model parameter which
control the size of the solution set.

The use of the $\epsilon$-contamination model stems from several reasons.
First of all, the softmax function can be interpreted as the probability
distribution $P$ in (\ref{RF_Att_40}) because its sum is $1$. It can be
represented as a point in the probabilistic unit simplex having $T$ vertices.
Second, weights $\alpha\left(  \mathbf{x},\mathbf{A}_{i}(\mathbf{x)}%
,w_{i}\right)  $ also can be interpreted as a probability distribution or
another point in the same unit simplex. This point is biased by means of the
probability distribution $Q$ in (\ref{RF_Att_40}) which is trained in order to
achieve the best prediction results. The contamination parameter $\epsilon$
can be regarded as a tuning parameter of the model. It should be noted that
$\epsilon$ can be viewed as the trainable parameter. However, this case leads
to a more complex optimization problem. After substituting elements of
$\alpha\left(  \mathbf{x},\mathbf{A}_{i}(\mathbf{x)},w_{i}\right)  $ into
(\ref{RF_Att_40}), we get
\begin{equation}
\alpha\left(  \mathbf{x}_{s},\mathbf{A}_{i}(\mathbf{x)},w_{i}\right)
=(1-\epsilon)\cdot\text{\textrm{softmax}}\left(  \left\Vert \mathbf{x}%
_{s}-\mathbf{A}_{i}(\mathbf{x}_{s}\mathbf{)}\right\Vert ^{2}/\tau\right)
+\epsilon\cdot w_{i}. \label{RF_Att_41}%
\end{equation}

Let us define the self-attention weights $\beta\left(  \tilde{y}_{i},\tilde
{y}_{k},v_{k}\right)  $ in the same way using the Huber's $\gamma
$-contamination model. In this case, we can write the similar expression:
\begin{equation}
\beta\left(  \tilde{y}_{i},\tilde{y}_{k},v_{k}\right)  =(1-\gamma
)\cdot\text{\textrm{softmax}}\left(  \left(  \tilde{y}_{i}-\tilde{y}%
_{k}\right)  ^{2}/\kappa\right)  +\gamma\cdot v_{k}. \label{RF_Att_141}%
\end{equation}

Here $\gamma$ is the same parameter of the contamination model as $\epsilon$.

After substituting (\ref{RF_Att_41}) and (\ref{RF_Att_141}) into
(\ref{RF_Att_123}), we get
\begin{equation}
\sum_{i=1}^{T}\sum_{k=1}^{T}\left(  D_{si}+\epsilon w_{i}\right)  \cdot\left(
C_{ik}+\gamma v_{k}\right)  \cdot\tilde{y}_{k}, \label{RF_Att_144}%
\end{equation}
where
\begin{equation}
D_{sk}=(1-\epsilon)\cdot\mathrm{softmax}\left(  \left\Vert \mathbf{x}%
_{s}-\mathbf{A}_{i}(\mathbf{x}_{s}\mathbf{)}\right\Vert ^{2}/\tau\right)  ,
\label{RF_Att_145}%
\end{equation}%
\begin{equation}
C_{ik}=(1-\gamma)\cdot\text{\textrm{softmax}}\left(  \left(  \tilde{y}%
_{i}-\tilde{y}_{k}\right)  ^{2}/\kappa\right)  . \label{RF_Att_146}%
\end{equation}

Expression (\ref{RF_Att_144}) can be rewritten as
\begin{align}
&  \sum_{i=1}^{T}\sum_{k=1}^{T}\tilde{y}_{k}\left(  D_{si}C_{ik}%
+C_{ik}\epsilon w_{i}+D_{si}\gamma v_{k}+\epsilon\gamma w_{i}v_{k}\right)
\nonumber\\
&  =\sum_{i=1}^{T}\sum_{k=1}^{T}\tilde{y}_{k}D_{si}C_{ik}+\epsilon\sum
_{i=1}^{T}\sum_{k=1}^{T}\tilde{y}_{k}C_{ik}w_{i}+\gamma\sum_{i=1}^{T}%
D_{si}\sum_{k=1}^{T}\tilde{y}_{k}v_{k}+\epsilon\gamma\sum_{k=1}^{T}\tilde
{y}_{k}v_{k}\nonumber\\
&  =R_{s}+\sum_{i=1}^{T}H_{i}w_{i}+\sum_{k=1}^{T}G_{sk}v_{k}.
\label{RF_Att_148}%
\end{align}
where
\begin{equation}
R_{s}=\sum_{i=1}^{T}\sum_{k=1}^{T}\tilde{y}_{k}D_{si}C_{ik},\ G_{sk}%
=\gamma\left(  \sum_{i=1}^{T}D_{si}+\epsilon\right)  \tilde{y}_{k}%
,\ H_{i}=\epsilon\sum_{k=1}^{T}\tilde{y}_{k}C_{ik}.
\end{equation}

Notations $D_{sk}$, $C_{ik}$, $R_{s}$, $G_{sk}$, $H_{i}$ do not depend on
$\mathbf{w}$ and $\mathbf{v}$ and are introduced for short.

It follows from (\ref{RF_Att_148}) that the optimization problem
(\ref{RF_Att_50}) is represented as
\begin{align}
&  \min_{\mathbf{w},\ \mathbf{v}}\sum_{s=1}^{n}L\left(  \tilde{y}_{s}%
,y_{s},\mathbf{w},\mathbf{v}\right) \nonumber\\
&  =\sum_{s=1}^{n}\left(  y_{s}-R_{s}-\sum_{i=1}^{T}H_{i}w_{i}-\sum_{k=1}%
^{T}G_{sk}v_{k}\right)  ^{2}, \label{RF_Att_150}%
\end{align}
subject to $w_{k}\geq0$, $v_{k}\geq0$, $k=1,...,T$, and $\sum_{k=1}^{T}%
w_{k}=1$, $\sum_{k=1}^{T}v_{k}=1$.

One of the advantages of the proposed SAT-RF is that it is simple from the
computational point of view because problem (\ref{RF_Att_150}) is the standard
quadratic programming problem which can be simply solved. Moreover, it has a
unique solution.

The optimal trainable parameters $\mathbf{w}$ and $\mathbf{v}$ can be also
computed by solving the linear optimization problem if to use the $L_{1}$-norm
for defining the loss function $L\left(  \tilde{y}_{s},y_{s},\mathbf{w}%
\right)  $. In this case, we replace (\ref{RF_Att_150}) with the following
objective function:
\begin{align}
&  \min_{\mathbf{w},\ \mathbf{v}}\sum_{s=1}^{n}L\left(  \tilde{y}_{s}%
,y_{s},\mathbf{w},\mathbf{v}\right) \nonumber\\
&  =\sum_{s=1}^{n}\left\vert y_{s}-R_{s}-\sum_{i=1}^{T}H_{i}w_{i}-\sum
_{k=1}^{T}G_{sk}v_{k}\right\vert ,
\end{align}

Denote%
\begin{equation}
Q_{s}=y_{s}-R_{s}-\sum_{i=1}^{T}H_{i}w_{i}-\sum_{k=1}^{T}G_{sk}v_{k}\mathbf{.}%
\end{equation}

Then we can write the following linear optimization problem with variables
$Q_{1},...,Q_{T}$, $\mathbf{w}$ and $\mathbf{v}$:%

\begin{equation}
\min_{\mathbf{w},\ \mathbf{v}}\sum_{s=1}^{n}Q_{s}, \label{RF_Att_34}%
\end{equation}
subject to $w_{k}\geq0$, $v_{k}\geq0$, $k=1,...,T$, and $\sum_{k=1}^{T}%
w_{k}=1$, $\sum_{k=1}^{T}v_{k}=1$, and
\begin{equation}
Q_{s}+\sum_{i=1}^{T}H_{i}w_{i}+\sum_{k=1}^{T}G_{sk}v_{k}\geq y_{s}%
-R_{s},\ s=1,...,n, \label{RF_Att_35}%
\end{equation}%
\begin{equation}
Q_{s}-\sum_{i=1}^{T}H_{i}w_{i}-\sum_{k=1}^{T}G_{sk}v_{k}\geq-y_{s}%
+R_{s},\ s=1,...,n. \label{RF_Att_36}%
\end{equation}

The above linear optimization problem has $3T$ variables and $2T+3n+2$ constraints.

\section{Multi-head self-attention}

One of the possible extensions of the self-attention mechanism is the
multi-head self-attention which is widely used to combine knowledge of the
self-attention via different representation of its tuning parameters. It turns
out that the multi-head self-attention can be incorporated into the
attention-based RF such that its trainable parameters are computed jointly with
parameters of the attention-based RF by solving a single quadratic
optimization problem.

Let us return to (\ref{RF_Att_123}) and rewrite the expression for estimating
$\tilde{y}$ as follows:
\begin{equation}
\tilde{y}=\sum_{i=1}^{T}\sum_{k(1)=1}^{T}\alpha\left(  \mathbf{x}%
,\mathbf{A}_{i}(\mathbf{x)},w_{i}\right)  \cdot\beta_{1}\left(  \tilde{y}%
_{i},\tilde{y}_{k(1)},v_{k(1)}^{(1)}\right)  \cdot\tilde{y}_{k(1)},
\label{RF_Att_160}%
\end{equation}
where $\mathbf{v}^{(1)}=(v_{1}^{(1)},...,v_{T}^{(1)})$ is the vector of
trainable variable of the first self-attention; $k(1)$ is the index
corresponding to the first self-attention.

Note that $\tilde{y}_{k(1)}$ as the value in terms of the attention mechanism
can be represented by means of the self-attention (\ref{RF_Att_121}). Hence,
(\ref{RF_Att_160}) can be rewritten as%
\begin{align}
\tilde{y}  &  =\sum_{i=1}^{T}\sum_{k(1)=1}^{T}\sum_{k(2)=1}^{T}\alpha\left(
\mathbf{x},\mathbf{A}_{i}(\mathbf{x)},w_{i}\right)  \cdot\beta_{1}\left(
\tilde{y}_{i},\tilde{y}_{k(1)},v_{k(1)}^{(1)}\right) \nonumber\\
&  \times\beta_{2}\left(  \tilde{y}_{i},\tilde{y}_{k(2)},v_{k(2)}%
^{(2)}\right)  \cdot\tilde{y}_{k(2)},
\end{align}
where $\mathbf{v}^{(2)}=(v_{1}^{(2)},...,v_{T}^{(2)})$ is the vector of
trainable variable of the second self-attention.

In the same way, we can continue writing self-attention operations and get
\begin{align}
\tilde{y}  &  =\sum_{i=1}^{T}\sum_{k(1)=1}^{T}\sum_{k(2)=1}^{T}\cdot\cdot
\cdot\sum_{k(t)=1}^{T}\alpha\left(  \mathbf{x},\mathbf{A}_{i}(\mathbf{x)}%
,w_{i}\right)  \cdot\beta_{1}\left(  \tilde{y}_{i},\tilde{y}_{k(1)}%
,v_{k(1)}^{(1)}\right) \nonumber\\
&  \times\beta_{2}\left(  \tilde{y}_{k(1)},\tilde{y}_{k(2)},v_{k(2)}%
^{(2)}\right)  \cdot\cdot\cdot\beta_{t}\left(  \tilde{y}_{k(t-1)},\tilde
{y}_{k(t)},v_{k(t)}^{(t)}\right)  \cdot\tilde{y}_{k(t)}, \label{RF_Att_170}%
\end{align}

In sum, we get a regression with $t$ self-attention operations having $t$
self-attention weights $\beta_{1},...,\beta_{t}$ with $t$ vectors of trainable
parameters $\mathbf{v}^{(1)},...,\mathbf{v}^{(t)}$ and the parameters
$\mathbf{w}$ of the attention.

Let us consider the case when the $j$-th self-attention weight $\beta
_{j}\left(  \tilde{y}_{k(j-1)},\tilde{y}_{k(j)},v_{k(j)}^{(j)}\right)  $ is
represented by the Huber's $\gamma_{j}$-contamination model as
\begin{equation}
\beta_{j}\left(  \tilde{y}_{k(j-1)},\tilde{y}_{k(j)},v_{k(j)}^{(j)}\right)
=(1-\gamma_{j})\cdot\text{\textrm{softmax}}\left(  \left(  \tilde{y}%
_{k(j-1)}-\tilde{y}_{k(j)}\right)  ^{2}/\kappa_{j}\right)  +\gamma_{j}\cdot
v_{k(j)}^{(j)}. \label{RF_Att_172}%
\end{equation}

Here $\gamma_{j}$ and $\kappa_{j}$ are tuning parameters of the $j$-th
contamination model. If parameters $\gamma_{j}$ and $\kappa_{j}$ are
differently defined for different $j=1,...,t$, then the obtained scheme can be
regarded as an analogue of the original multi-head self-attention. The random
choice of values of $\gamma_{j}$ and $\kappa_{j}$ is similar to the random
choice of initial weights in the neural network implementation of the
multi-head self-attention.

\begin{proposition}
\label{pr:Self_Att}If the self-attention weights $\alpha\left(  \mathbf{x}%
,\mathbf{A}_{i}(\mathbf{x)},w_{i}\right)  $ and $\beta_{j}\left(  \tilde
{y}_{k(j-1)},\tilde{y}_{k(j)},v_{k(j)}^{(j)}\right)  $ for all $j=1,...,t$,
are defined by (\ref{RF_Att_41}) and (\ref{RF_Att_172}), then $\tilde{y}$ in
(\ref{RF_Att_170}) is a linear function of parameters $\mathbf{w}$,
$\mathbf{v}^{(1)},...,\mathbf{v}^{(t)}$.
\end{proposition}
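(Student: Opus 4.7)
The plan is to prove linearity by downward induction on the summation depth, exploiting the fact that each factor in (\ref{RF_Att_170}) is affine in its own parameter and that each parameter vector lies on the probability simplex. From (\ref{RF_Att_41}) the attention factor is $\alpha(\mathbf{x}, \mathbf{A}_{i}(\mathbf{x}), w_i) = D_i + \epsilon w_i$, and from (\ref{RF_Att_172}) each self-attention factor is $\beta_j(\tilde{y}_{k(j-1)}, \tilde{y}_{k(j)}, v_{k(j)}^{(j)}) = C^{(j)}_{k(j-1),k(j)} + \gamma_j v_{k(j)}^{(j)}$, where $D_i$ and $C^{(j)}_{k(j-1),k(j)}$ are parameter-free. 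Because the softmax is normalized, $\sum_{k(j)} C^{(j)}_{k(j-1),k(j)} = 1-\gamma_j$, and by the simplex assumptions $\sum_i w_i = 1$ and $\sum_{k(j)} v_{k(j)}^{(j)} = 1$. These three identities are the key tools.

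Next, I would introduce telescoping tails
\begin{equation*}
T^{(t+1)}_{k(t)} = \tilde{y}_{k(t)}, \qquad T^{(j)}_{k(j-1)} = \sum_{k(j)=1}^{T} \beta_j\!\left(\tilde{y}_{k(j-1)}, \tilde{y}_{k(j)}, v_{k(j)}^{(j)}\right) \cdot T^{(j+1)}_{k(j)},
\end{equation*}
so that $\tilde{y} = \sum_{i=1}^{T} \alpha(\mathbf{x}, \mathbf{A}_i, w_i) \cdot T^{(1)}_i$. The induction hypothesis, running downward from $j=t$ to $j=1$, is that
\begin{equation*}
T^{(j)}_{k(j-1)} = a_j(k(j-1)) + \sum_{\ell=j}^{t} L_\ell\!\left(\mathbf{v}^{(\ell)}\right),
\end{equation*}
where $a_j$ is a scalar-valued function of $k(j-1)$ alone and each $L_\ell$ is a linear form in $\mathbf{v}^{(\ell)}$ that does \emph{not} depend on $k(j-1)$. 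In the inductive step, substituting the affine form of $\beta_j$ and expanding produces four pieces; the two cross terms collapse exactly because $\sum_{k(j)} C^{(j)}_{k(j-1),k(j)} = 1-\gamma_j$ and $\sum_{k(j)} v_{k(j)}^{(j)} = 1$, while the pure-$C$ term defines the new $a_j$ and the pure-$v^{(j)}$ term defines the new $L_j(\mathbf{v}^{(j)})$.

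Finally, I would combine $\tilde{y} = \sum_i (D_i + \epsilon w_i)\bigl(a_1(i) + \sum_{\ell=1}^{t} L_\ell(\mathbf{v}^{(\ell)})\bigr)$. The cross term $\epsilon \sum_i w_i \cdot \sum_\ell L_\ell$ factors because each $L_\ell$ is independent of $i$ and then collapses to $\epsilon \sum_\ell L_\ell$ via $\sum_i w_i = 1$. The remaining pieces yield a linear form in $\mathbf{w}$, a linear form in the $\mathbf{v}^{(\ell)}$, and a constant, so $\tilde{y}$ is affine (hence linear in the sense used by the paper) in the full parameter tuple $(\mathbf{w}, \mathbf{v}^{(1)}, \ldots, \mathbf{v}^{(t)})$.

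The only delicate point is the bookkeeping in the induction, specifically verifying that the accumulated linear forms $L_\ell$ never acquire dependence on the free index $k(j-1)$ as the recursion unwinds. This cleanliness is exactly what the simplex constraints buy; without them, monomials such as $w_i v_{k(1)}^{(1)}$ or $v_{k(1)}^{(1)} v_{k(2)}^{(2)}$ would survive after summation and linearity would fail.
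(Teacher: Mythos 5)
Your proof is correct and takes essentially the same route as the paper's: expand each affine contamination-model factor and use the simplex constraints $\sum_{i} w_{i}=1$ and $\sum_{k(j)} v_{k(j)}^{(j)}=1$ to collapse the bilinear cross terms, working from the innermost sum outward. Your explicit downward induction with the tail functions $T^{(j)}$ is a cleaner formalization of what the paper carries out only for the pair $\beta_{t-1},\beta_{t}$ before asserting that the remaining factors are handled ``in the same way,'' and your double-indexed $C^{(j)}_{k(j-1),k(j)}$ correctly records the dependence on $k(j-1)$ that the paper's notation $C_{k(j)}$ suppresses.
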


\begin{proof}
Introduce the following notations for short:%
\begin{equation}
C_{k(j)}=(1-\gamma_{j})\cdot\text{\textrm{softmax}}\left(  \left(  \tilde
{y}_{k(j-1)}-\tilde{y}_{k(j)}\right)  ^{2}/\kappa_{j}\right)  .
\end{equation}
Then we write
\begin{align}
\tilde{y}=  &  \sum_{i=1}^{T}\left(  D_{si}+\epsilon w_{i}\right)
\sum_{k(1)=1}^{T}\left(  C_{k(1)}+\gamma_{1}v_{k(1)}^{(1)}\right) \nonumber\\
&  \cdot\cdot\cdot\sum_{k(t-1)=1}^{T}\left(  C_{k(t-1)}+\gamma_{t-1}%
v_{k(t-1)}^{(t-1)}\right)  \sum_{k(t)=1}^{T}\left(  C_{k(t)}+\gamma
_{t}v_{k(t)}^{(t)}\right)  \tilde{y}_{k(t)}. \label{RF_Att_175}%
\end{align}
Let us consider $\beta_{t-1}$ and $\beta_{t}$
\begin{align}
&  \sum_{k(t-1)=1}^{T}\left(  C_{k(t-1)}+\gamma_{t-1}v_{k(t-1)}^{(t-1)}%
\right)  \sum_{k(t)=1}^{T}\left(  C_{k(t)}+\gamma_{t}v_{k(t)}^{(t)}\right)
\tilde{y}_{k(t)}\nonumber\\
&  =\sum_{k(t-1)=1}^{T}\sum_{k(t)=1}^{T}\left(  C_{k(t-1)}+\gamma
_{t-1}v_{k(t-1)}^{(t-1)}\right)  \left(  C_{k(t)}+\gamma_{t}v_{k(t)}%
^{(t)}\right)  \tilde{y}_{k(t)}\nonumber\\
&  =\sum_{k(t)=1}^{T}\left(  \sum_{k(t-1)=1}^{T}C_{k(t-1)}C_{k(t)}%
+\gamma_{t-1}\sum_{k(t-1)=1}^{T}C_{k(t)}v_{k(t-1)}^{(t-1)}\right. \nonumber\\
&  \left.  +\gamma_{t}v_{k(t)}^{(t)}\sum_{k(t-1)=1}^{T}C_{k(t-1)}+\gamma
_{t-1}\gamma_{t}v_{k(t)}^{(t)}\sum_{k(t-1)=1}^{T}v_{k(t-1)}^{(t-1)}\right)
\tilde{y}_{k(t)}.
\end{align}
It should be noted that $\sum_{k(t-1)=1}^{T}v_{k(t-1)}^{(t-1)}=1$. This
implies that the product of $\beta_{t-1}$ and $\beta_{t}$ linearly depends on
$v_{k(t-1)}^{(t-1)}$ and $v_{k(t)}^{(t)}$. Let us rewrite the obtained product
as follows:
\begin{align}
&  \sum_{k(t-1)=1}^{T}\left(  \sum_{k(t)=1}^{T}\tilde{y}_{k(t)}C_{k(t-1)}%
C_{k(t)}+\gamma_{t-1}\sum_{k(t)=1}^{T}\tilde{y}_{k(t)}C_{k(t)}v_{k(t-1)}%
^{(t-1)}\right. \nonumber\\
&  \left.  +\gamma_{t}\sum_{k(t)=1}^{T}\tilde{y}_{k(t)}v_{k(t)}^{(t)}%
C_{k(t-1)}+\gamma_{t-1}\gamma_{t}\sum_{k(t)=1}^{T}\tilde{y}_{k(t)}%
v_{k(t)}^{(t)}\right) \\
&  =\sum_{k(t-1)=1}^{T}\left(  G_{k(t)}+r_{k(t-1)}^{(t-1)}\right)  .
\end{align}
where
\begin{equation}
G_{k(t)}=\sum_{k(t)=1}^{T}\tilde{y}_{k(t)}C_{k(t-1)}C_{k(t)},
\end{equation}
is a constant, and
\begin{align}
r_{k(t-1)}^{(t-1)}  &  =v_{k(t-1)}^{(t-1)}\gamma_{t-1}\sum_{k(t)=1}^{T}%
\tilde{y}_{k(t)}C_{k(t)}\nonumber\\
&  +\gamma_{t}\sum_{k(t)=1}^{T}\tilde{y}_{k(t)}v_{k(t)}^{(t)}C_{k(t-1)}%
+\gamma_{t-1}\gamma_{t}\sum_{k(t)=1}^{T}\tilde{y}_{k(t)}v_{k(t)}^{(t)},
\end{align}
is a new variable instead of $v_{k(t-1)}^{(t-1)}$ and $v_{k(t)}^{(t)}$.

In the same way, we can consider all other multipliers in (\ref{RF_Att_175})
starting from pair $\beta_{t-2}$ and $\beta_{t-1}$, as was to be proved.
\end{proof}

The linearity of $\tilde{y}$ as the function of variables $\mathbf{w}$,
$\mathbf{v}^{(1)},...,\mathbf{v}^{(t)}$ implies that the optimization problem
for training these variables can be reduced to the quadratic or linear
optimization problems like (\ref{RF_Att_150}) or (\ref{RF_Att_34}). However,
writing the optimization problem by $t>2$ is a hard problem. Therefore, the
multi-head self-attention was given to show the fundamental possibility of
generalizing the self-attention-based RF. An approach for efficient
representation of the multi-head attention can be regarded as a separate
problem whose solution is a direction for further research.

\section{Numerical experiments}

In order to study the proposed approach for solving regression problems, we
apply datasets which are taken from open sources: the dataset Diabetes is
available in the corresponding R Packages; datasets Friedman 1, 2 3 can be
found at site: https://www.stat.berkeley.edu/\symbol{126}breiman/bagging.pdf;
Regression and Sparse datasets are available in package \textquotedblleft
Scikit-Learn\textquotedblright. The proposed algorithm is evaluated and
investigated also by the following publicly available datasets from the UCI
Machine Learning Repository \cite{Dua:2019}: Wine Red, Boston Housing,
Concrete, Yacht Hydrodynamics, Airfoil. A brief introduction about these data
sets are given in Table \ref{t:regres_datasets} where $m$ and $n$ are numbers
of features and examples, respectively. A more detailed information can be
found from the aforementioned data resources.%

\begin{table}[tbp] \centering
\caption{A brief introduction about the regression data sets}%
\begin{tabular}
[c]{cccc}\hline
Data set & Abbreviation & $m$ & $n$\\\hline
Diabetes & Diabetes & $10$ & $442$\\\hline
Friedman 1 & Friedman 1 & $10$ & $100$\\\hline
Friedman 2 & Friedman 2 & $4$ & $100$\\\hline
Friedman 3 & Friedman 3 & $4$ & $100$\\\hline
Scikit-Learn Regression & Regression & $100$ & $100$\\\hline
Scikit-Learn Sparse Uncorrelated & Sparse & $10$ & $100$\\\hline
UCI Wine red & Wine & $11$ & $1599$\\\hline
UCI Boston Housing & Boston & $13$ & $506$\\\hline
UCI Concrete & Concrete & $8$ & $1030$\\\hline
UCI Yacht Hydrodynamics & Yacht & $6$ & $308$\\\hline
UCI Airfoil & Airfoil & $5$ & $1503$\\\hline
\end{tabular}
\label{t:regres_datasets}%
\end{table}%

The coefficient of determination denoted $R^{2}$ and the mean absolute error
(MAE) are used for the regression evaluation. The greater the value of the
coefficient of determination and the smaller the MAE, the better results we
get. In all tables, we compare $R^{2}$ and the MAE for three cases:

\begin{enumerate}
\item \textbf{RF},\textbf{ ERT}: the original RF or the ERT without the
softmax and without attention model;

\item \textbf{Softmax} model: the RF or the ERT with softmax operations
without trainable parameters, i.e., weights of trees are determined under
conditions $\epsilon=0$ and $\gamma=0$.

\item \textbf{SAT-RF-y}, \textbf{SAT-RF-x}, \textbf{SAT-RF-yx}: the
corresponding modifications of the SAT-RF models.
\end{enumerate}

The best results in all tables are shown in bold. Moreover, the optimal values
of the contamination parameters $\epsilon_{opt}$ and $\gamma_{opt}$ are
provided. The case $\epsilon_{opt}=1$ means that weights of trees are totally
determined by the tree results and do not depend on each example. This case
coincides with the weighted RF proposed in \cite{Utkin-Konstantinov-etal-20}.
The case $\epsilon_{opt}=0$ means that weights of trees are determined only by
the softmax function (with or without trainable parameters). Cases
$\gamma_{opt}=1$ and $\gamma_{opt}=0$ have the same sense.

Decision trees in numerical experiments are trained such that at least $10$
examples fall into every leaf of trees. This condition is used to get
desirable estimates of vectors $\mathbf{A}_{k}(\mathbf{x}_{s}\mathbf{)}$.

Every RF or ERT consists of $100$ decision trees. In order to optimize the
model parameters $\epsilon$ and $\tau$ in numerical experiments, we perform a
3-fold cross-validation on the training set which consists of $n_{\text{tr}%
}=4n/5$ instances. The cross-validation is performed with $100$ repetitions.
This procedure is realized by considering all possible values of $\epsilon$
and $\tau$ in a predefined grid. The testing set consisting of $n_{\text{test}%
}=n/5$ instances is used for computing the accuracy measures of the whole model.

First, we consider SAT-RF-y. It has four tuning parameters $\epsilon$,
$\gamma$, $\kappa$ and $\tau$, which may significantly impact on predictions.
Therefore, the best predictions are calculated at a predefined grid of the
parameters $\epsilon$, $\gamma$, and a cross-validation procedure is
subsequently used to select an appropriate values of $\epsilon$ and $\gamma$.
Moreover, parameters $\kappa$ and $\tau$ are taken equal to $1$. If SAT-RF-y
provides outperforming results with fixed $\kappa$ and $\tau$, then optimal
choice of these parameters will improve the model.

Measures $R^{2}$ and MAE for three models (RF, Softmax and SAT-RF-y) are shown
in Table \ref{t:regression_1_10}. The results are obtained by training the RF
and the parameter vectors $\mathbf{w}$ and $\mathbf{v}$ on the regression
datasets. It can be seen from Table \ref{t:regression_1_10} that SAT-RF-y
outperforms the RF and the Softmax models almost for all datasets. The same
results are shown in Table \ref{t:regression_1_11} under condition that the RF
in experiments is replaced with the ERT. One can again see from Table
\ref{t:regression_1_11} that SAT-RF-y outperforms the ERT and the Softmax
models for most datasets. Optimal values of tuning parameters $\epsilon$ and
$\gamma$ are also shown in Tables \ref{t:regression_1_10} and
\ref{t:regression_1_11}. It is also interesting to point out that SAT-RF-y and
Softmax using the RF provide the same measures $R^{2}$ or MAE for datasets
Diabetes and Sparse. Indeed, it can be seen from Table \ref{t:regression_1_10}
that $\epsilon_{opt}=\gamma_{opt}=0$. This implies that only softmax
operations without trainable parameters define the attention weights. It can
be seen from Table \ref{t:regression_1_11} that this case does not take place
for SAT-RF-y using ERT.%

\begin{table}[tbp] \centering
\caption{Measures $R^2$ and MAE for comparison of models (the RF, the Softmax model, SAT-RF-y) trained on regression datasets}%
\begin{tabular}
[c]{ccccccccc}\hline
&  &  & \multicolumn{3}{c}{$R^{2}$} & \multicolumn{3}{c}{MAE}\\\hline
Data set & $\epsilon_{opt}$ & $\gamma_{opt}$ & RF & Softmax & SAT-RF-y & RF &
Softmax & SAT-RF-y\\\hline
Diabetes & $0$ & $0$ & $0.416$ & $\mathbf{0.424}$ & $\mathbf{0.424}$ & $44.92$
& $\mathbf{44.66}$ & $\mathbf{44.66}$\\\hline
Friedman 1 & $0$ & $1$ & $0.459$ & $0.438$ & $\mathbf{0.470}$ &
$\mathbf{2.540}$ & $2.589$ & $\mathbf{2.540}$\\\hline
Friedman 2 & $0.5$ & $0$ & $0.841$ & $0.849$ & $\mathbf{0.878}$ & $111.7$ &
$109.5$ & $\mathbf{100.9}$\\\hline
Friedman 3 & $0$ & $1$ & $0.625$ & $0.625$ & $\mathbf{0.682}$ & $0.155$ &
$0.156$ & $\mathbf{0.133}$\\\hline
Regression & $0.5$ & $0.25$ & $0.380$ & $0.367$ & $\mathbf{0.454}$ & $109.1$ &
$110.2$ & $\mathbf{100.5}$\\\hline
Sparse & $0$ & $0$ & $0.470$ & $\mathbf{0.522}$ & $\mathbf{0.522}$ & $1.908$ &
$\mathbf{1.802}$ & $\mathbf{1.802}$\\\hline
Airfoil & $1$ & $0.75$ & $0.823$ & $0.820$ & $\mathbf{0.843}$ & $2.203$ &
$2.231$ & $\mathbf{2.069}$\\\hline
Boston & $0$ & $0.5$ & $0.814$ & $0.818$ & $\mathbf{0.823}$ & $2.539$ &
$2.508$ & $\mathbf{2.494}$\\\hline
Concrete & $0$ & $1$ & $0.845$ & $0.841$ & $\mathbf{0.857}$ & $4.855$ &
$4.948$ & $\mathbf{4.694}$\\\hline
Wine & $0.25$ & $0.25$ & $\mathbf{0.433}$ & $0.423$ & $0.424$ &
$\mathbf{0.451}$ & $0.460$ & $0.459$\\\hline
Yacht & $1$ & $0.5$ & $0.981$ & $0.981$ & $\mathbf{0.989}$ & $1.004$ & $1.006$
& $\mathbf{0.787}$\\\hline
\end{tabular}
\label{t:regression_1_10}%
\end{table}%
%

\begin{table}[tbp] \centering
\caption{Measures $R^2$ and MAE for comparison of models (the ERT, the Softmax model, SAT-RF-y) trained on regression datasets}%
\begin{tabular}
[c]{ccccccccc}\hline
&  &  & \multicolumn{3}{c}{$R^{2}$} & \multicolumn{3}{c}{MAE}\\\hline
Data set & $\epsilon_{opt}$ & $\gamma_{opt}$ & ERT & Softmax & SAT-RF-y &
ERT & Softmax & SAT-RF-y\\\hline
Diabetes & $0.75$ & $0$ & $0.456$ & $\mathbf{0.458}$ & $0.453$ & $44.50$ &
$\mathbf{44.38}$ & $44.51$\\\hline
Friedman 1 & $1$ & $0.75$ & $0.471$ & $0.471$ & $\mathbf{0.521}$ & $2.502$ &
$2.502$ & $\mathbf{2.414}$\\\hline
Friedman 2 & $1$ & $0.25$ & $0.813$ & $0.813$ & $\mathbf{0.939}$ & $123.03$ &
$122.66$ & $\mathbf{73.77}$\\\hline
Friedman 3 & $0$ & $1$ & $0.570$ & $0.570$ & $\mathbf{0.739}$ & $0.179$ &
$0.179$ & $\mathbf{0.138}$\\\hline
Regression & $1$ & $0$ & $0.402$ & $0.403$ & $\mathbf{0.455}$ & $106.3$ &
$106.2$ & $\mathbf{101.8}$\\\hline
Sparse & $0$ & $0.25$ & $0.452$ & $0.514$ & $\mathbf{0.531}$ & $1.994$ &
$1.870$ & $\mathbf{1.830}$\\\hline
Airfoil & $1$ & $0.75$ & $0.802$ & $0.802$ & $\mathbf{0.837}$ & $2.370$ &
$2.370$ & $\mathbf{2.127}$\\\hline
Boston & $0.5$ & $0.75$ & $0.831$ & $0.833$ & $\mathbf{0.837}$ & $2.481$ &
$2.467$ & $\mathbf{2.453}$\\\hline
Concrete & $0$ & $1$ & $0.851$ & $0.851$ & $\mathbf{0.869}$ & $4.892$ &
$4.892$ & $\mathbf{4.650}$\\\hline
Wine & $1$ & $0.25$ & $0.418$ & $0.418$ & $\mathbf{0.419}$ & $0.464$ &
$\mathbf{0.463}$ & $\mathbf{0.463}$\\\hline
Yacht & $0$ & $1$ & $\mathbf{0.988}$ & $\mathbf{0.988}$ & $\mathbf{0.988}$ &
$0.824$ & $0.824$ & $\mathbf{0.818}$\\\hline
\end{tabular}
\label{t:regression_1_11}%
\end{table}%

The next modification for studying is SAT-RF-x, The corresponding results of
numerical experiments under the same condition as experiments with SAT-RF-y
are shown in Tables \ref{t:regression_1_14}-\ref{t:regression_1_15}. However,
if to compare these results with results given in Tables
\ref{t:regression_1_10}-\ref{t:regression_1_11}, then they are mainly inferior
to SAT-RF-y and comparable to this modification when the RF is used. The same
can be seen from Table \ref{t:regression_compar} where SAT-RF-x is compared
with SAT-RF-y and SAT-RF-yx. The results can be explained as follows. A large
distance between $A_{i}(\mathbf{x})$ and $A_{j}(\mathbf{x})$ mainly says about
a large difference between subsets of examples used for training the $i$-th
and the $j$-th trees. However, this distance does not say about predictions of
trees which are transformed by using the self-attention.%

\begin{table}[tbp] \centering
\caption{Measures $R^2$ and MAE for comparison of models (the RF, the Softmax model, SAT-RF-x) trained on regression datasets}%
\begin{tabular}
[c]{ccccccccc}\hline
&  &  & \multicolumn{3}{c}{$R^{2}$} & \multicolumn{3}{c}{MAE}\\\hline
Data set & $\epsilon_{opt}$ & $\gamma_{opt}$ & RF & Softmax & SAT-RF-x & RF &
Softmax & SAT-RF-x\\\hline
Diabetes & $0$ & $0$ & $0.405$ & $\mathbf{0.416}$ & $\mathbf{0.416}$ & $44.92$
& $\mathbf{44.87}$ & $\mathbf{44.87}$\\\hline
Friedman 1 & $0$ & $1$ & $0.459$ & $0.438$ & $\mathbf{0.470}$ & $2.540$ &
$2.589$ & $\mathbf{2.540}$\\\hline
Friedman 2 & $1$ & $1$ & $0.841$ & $0.834$ & $\mathbf{0.872}$ & $111.7$ &
$114.5$ & $\mathbf{103.7}$\\\hline
Friedman 3 & $0.5$ & $0.5$ & $0.625$ & $0.623$ & $\mathbf{0.684}$ & $0.154$ &
$0.156$ & $\mathbf{0.134}$\\\hline
Regression & $0.75$ & $0$ & $0.380$ & $0.374$ & $\mathbf{0.451}$ & $109.1$ &
$110.0$ & $\mathbf{100.4}$\\\hline
Sparse & $0$ & $0$ & $0.470$ & $\mathbf{0.488}$ & $\mathbf{0.488}$ & $1.908$ &
$\mathbf{1.860}$ & $\mathbf{1.860}$\\\hline
Airfoil & $0.25$ & $1$ & $0.823$ & $0.820$ & $\mathbf{0.843}$ & $2.203$ &
$2.231$ & $\mathbf{2.070}$\\\hline
Boston & $0.25$ & $0.5$ & $0.814$ & $0.814$ & $\mathbf{0.821}$ & $2.539$ &
$2.539$ & $\mathbf{2.518}$\\\hline
Concrete & $0$ & $1$ & $0.845$ & $0.841$ & $\mathbf{0.857}$ & $4.855$ &
$4.948$ & $\mathbf{4.694}$\\\hline
Wine & $0$ & $0.75$ & $0.433$ & $0.421$ & $\mathbf{0.422}$ & $0.451$ & $0.461$
& $\mathbf{0.459}$\\\hline
Yacht & $0$ & $1$ & $0.981$ & $0.981$ & $\mathbf{0.989}$ & $1.004$ & $1.004$ &
$\mathbf{0.787}$\\\hline
\end{tabular}
\label{t:regression_1_14}%
\end{table}%
%

\begin{table}[tbp] \centering
\caption{Measures $R^2$ and MAE for comparison of models (the ERT, the Softmax model, SAT-RF-x) trained on regression datasets}%
\begin{tabular}
[c]{ccccccccc}\hline
&  &  & \multicolumn{3}{c}{$R^{2}$} & \multicolumn{3}{c}{MAE}\\\hline
Data set & $\epsilon_{opt}$ & $\gamma_{opt}$ & ERT & Softmax & SAT-RF-x &
ERT & Softmax & SAT-RF-x\\\hline
Diabetes & $0$ & $0.75$ & $\mathbf{0.449}$ & $\mathbf{0.449}$ & $0.442$ &
$44.41$ & $\mathbf{44.37}$ & $44.57$\\\hline
Friedman 1 & $0$ & $1$ & $0.471$ & $0.471$ & $\mathbf{0.513}$ & $2.502$ &
$2.50$ & $\mathbf{2.426}$\\\hline
Friedman 2 & $0$ & $1$ & $0.813$ & $0.813$ & $\mathbf{0.930}$ & $123.0$ &
$123.0$ & $\mathbf{74.50}$\\\hline
Friedman 3 & $0$ & $1$ & $0.570$ & $0.570$ & $\mathbf{0.739}$ & $0.179$ &
$0.179$ & $\mathbf{0.138}$\\\hline
Regression & $1$ & $0$ & $0.402$ & $0.443$ & $\mathbf{0.493}$ & $106.3$ &
$102.5$ & $\mathbf{95.95}$\\\hline
Sparse & $0$ & $0.25$ & $0.452$ & $0.501$ & $\mathbf{0.518}$ & $1.994$ &
$1.887$ & $\mathbf{1.851}$\\\hline
Airfoil & $0.5$ & $1$ & $0.802$ & $0.802$ & $\mathbf{0.837}$ & $2.370$ &
$2.370$ & $\mathbf{2.128}$\\\hline
Boston & $1$ & $0.25$ & $0.831$ & $0.835$ & $\mathbf{0.843}$ & $2.481$ &
$2.447$ & $\mathbf{2.402}$\\\hline
Concrete & $0$ & $1$ & $0.851$ & $0.851$ & $\mathbf{0.863}$ & $4.892$ &
$4.892$ & $\mathbf{4.650}$\\\hline
Wine & $1$ & $0$ & $\mathbf{0.418}$ & $0.417$ & $\mathbf{0.418}$ &
$\mathbf{0.462}$ & $0.463$ & $\mathbf{0.462}$\\\hline
Yacht & $0$ & $1$ & $\mathbf{0.988}$ & $\mathbf{0.988}$ & $\mathbf{0.988}$ &
$0.824$ & $0.824$ & $\mathbf{0.818}$\\\hline
\end{tabular}
\label{t:regression_1_15}%
\end{table}%

Results of numerical experiments with SAT-RF-yx are presented in Tables
\ref{t:regression_1_12}-\ref{t:regression_1_13}. One can see from the tables
that SAT-RF-yx outperforms other models. In particular, it is shown in Table
\ref{t:regression_1_12} that SAT-RF-yx provides better results for all
datasets except for the Wine dataset. The same can be said about models
constructed by using ERTs. The corresponding results are shown in Table
\ref{t:regression_1_13}. If we compare results from Table
\ref{t:regression_1_12} with results from Table \ref{t:regression_1_13}, then
it is interesting to point out that the use of ERTs significantly improves the models.%

\begin{table}[tbp] \centering
\caption{Measures $R^2$ and MAE for comparison of models (the RF, the Softmax model, SAT-RF-yx) trained on regression datasets}%
\begin{tabular}
[c]{ccccccccc}\hline
&  &  & \multicolumn{3}{c}{$R^{2}$} & \multicolumn{3}{c}{MAE}\\\hline
Data set & $\epsilon_{opt}$ & $\gamma_{opt}$ & RF & Softmax & SAT-RF-yx & RF &
Softmax & SAT-RF-yx\\\hline
Diabetes & $0$ & $0$ & $0.416$ & $\mathbf{0.422}$ & $\mathbf{0.422}$ &
$\mathbf{44.92}$ & $45.01$ & $45.01$\\\hline
Friedman 1 & $1$ & $0.75$ & $0.459$ & $0.440$ & $\mathbf{0.489}$ & $2.540$ &
$2.574$ & $\mathbf{2.509}$\\\hline
Friedman 2 & $1$ & $0.5$ & $0.841$ & $0.788$ & $\mathbf{0.882}$ & $111.7$ &
$125.0$ & $\mathbf{95.84}$\\\hline
Friedman 3 & $0.5$ & $0.5$ & $0.625$ & $0.628$ & $\mathbf{0.685}$ & $0.154$ &
$0.155$ & $\mathbf{0.133}$\\\hline
Regression & $1$ & $0.25$ & $0.380$ & $0.363$ & $\mathbf{0.488}$ & $109.1$ &
$111.4$ & $\mathbf{96.50}$\\\hline
Sparse & $0$ & $0$ & $0.470$ & $0.531$ & $\mathbf{0.540}$ & $1.908$ & $1.783$
& $\mathbf{1.775}$\\\hline
Airfoil & $0.25$ & $1$ & $0.823$ & $0.820$ & $\mathbf{0.849}$ & $2.203$ &
$2.231$ & $\mathbf{2.070}$\\\hline
Boston & $0.25$ & $0.75$ & $0.814$ & $0.814$ & $\mathbf{0.824}$ & $2.539$ &
$2.546$ & $\mathbf{2.501}$\\\hline
Concrete & $1$ & $0.75$ & $0.845$ & $0.841$ & $\mathbf{0.866}$ & $4.834$ &
$4.921$ & $\mathbf{4.651}$\\\hline
Wine & $0.25$ & $0.5$ & $\mathbf{0.433}$ & $0.422$ & $0.429$ & $\mathbf{0.451}%
$ & $0.461$ & $0.458$\\\hline
Yacht & $1$ & $0.5$ & $0.981$ & $0.971$ & $\mathbf{0.989}$ & $1.004$ & $1.237$
& $\mathbf{0.790}$\\\hline
\end{tabular}
\label{t:regression_1_12}%
\end{table}%
%

\begin{table}[tbp] \centering
\caption{Measures $R^2$ and MAE for comparison of models (the ERT, the Softmax model, SAT-RF-yx) trained on regression datasets}%
\begin{tabular}
[c]{ccccccccc}\hline
&  &  & \multicolumn{3}{c}{$R^{2}$} & \multicolumn{3}{c}{MAE}\\\hline
Data set & $\epsilon_{opt}$ & $\gamma_{opt}$ & ERT & Softmax & SAT-RF-yx &
ERT & Softmax & SAT-RF-yx\\\hline
Diabetes & $0$ & $0$ & $0.438$ & $\mathbf{0.439}$ & $\mathbf{0.439}$ & $44.55$
& $\mathbf{44.26}$ & $\mathbf{44.26}$\\\hline
Friedman 1 & $0$ & $1$ & $0.471$ & $0.471$ & $\mathbf{0.513}$ & $2.502$ &
$2.502$ & $\mathbf{2.426}$\\\hline
Friedman 2 & $0$ & $1$ & $0.813$ & $0.813$ & $\mathbf{0.930}$ & $123.0$ &
$123.0$ & $\mathbf{74.49}$\\\hline
Friedman 3 & $1$ & $0.5$ & $0.570$ & $0.570$ & $\mathbf{0.751}$ & $0.179$ &
$0.179$ & $\mathbf{0.137}$\\\hline
Regression & $0$ & $0.75$ & $0.402$ & $0.411$ & $\mathbf{0.449}$ & $106.3$ &
$105.4$ & $\mathbf{101.1}$\\\hline
Sparse & $0$ & $0.25$ & $0.452$ & $0.518$ & $\mathbf{0.542}$ & $1.994$ &
$1.863$ & $\mathbf{1.822}$\\\hline
Airfoil & $0.5$ & $1$ & $0.802$ & $0.802$ & $\mathbf{0.841}$ & $2.370$ &
$2.370$ & $\mathbf{2.128}$\\\hline
Boston & $0.75$ & $0$ & $0.831$ & $0.836$ & $\mathbf{0.844}$ & $2.481$ &
$2.452$ & $\mathbf{2.427}$\\\hline
Concrete & $1$ & $0.5$ & $0.839$ & $0.839$ & $\mathbf{0.860}$ & $5.119$ &
$5.128$ & $\mathbf{4.689}$\\\hline
Wine & $0$ & $0.75$ & $0.418$ & $0.417$ & $\mathbf{0.447}$ & $0.464$ & $0.463$
& $\mathbf{0.462}$\\\hline
Yacht & $0$ & $1$ & $\mathbf{0.988}$ & $\mathbf{0.988}$ & $\mathbf{0.988}$ &
$0.824$ & $0.824$ & $\mathbf{0.818}$\\\hline
\end{tabular}
\label{t:regression_1_13}%
\end{table}%

Fig. \ref{f:Sparse} illustrates how measure $R^{2}$ depends on the attention
parameters $\tau$, $\epsilon$ (the left picture) and on the self-attention
parameters $\kappa$, $\gamma$ (the right picture) for the Sparse dataset. It
is interesting to see from Fig. \ref{f:Sparse} that $R^{2}$ achieves its
maximum by $\tau=1$ and $\epsilon=0$ or $\epsilon=0.25$. At the same time,
$R^{2}$ achieves its maximum by $\kappa<1$ and $\gamma=0.25$. The optimal
values $\epsilon$ and $\gamma$ coincide with the corresponding optimal values
shown in Table \ref{t:regression_1_13}. Figs. \ref{f:Friedman1},
\ref{f:boston}, \ref{f:wine} illustrate the same dependencies for the Friedman
1, Boston, Wine datasets, respectively.%

\begin{figure}
[ptb]
\begin{center}
\includegraphics[
height=2.4326in,
width=5.3313in
]%
{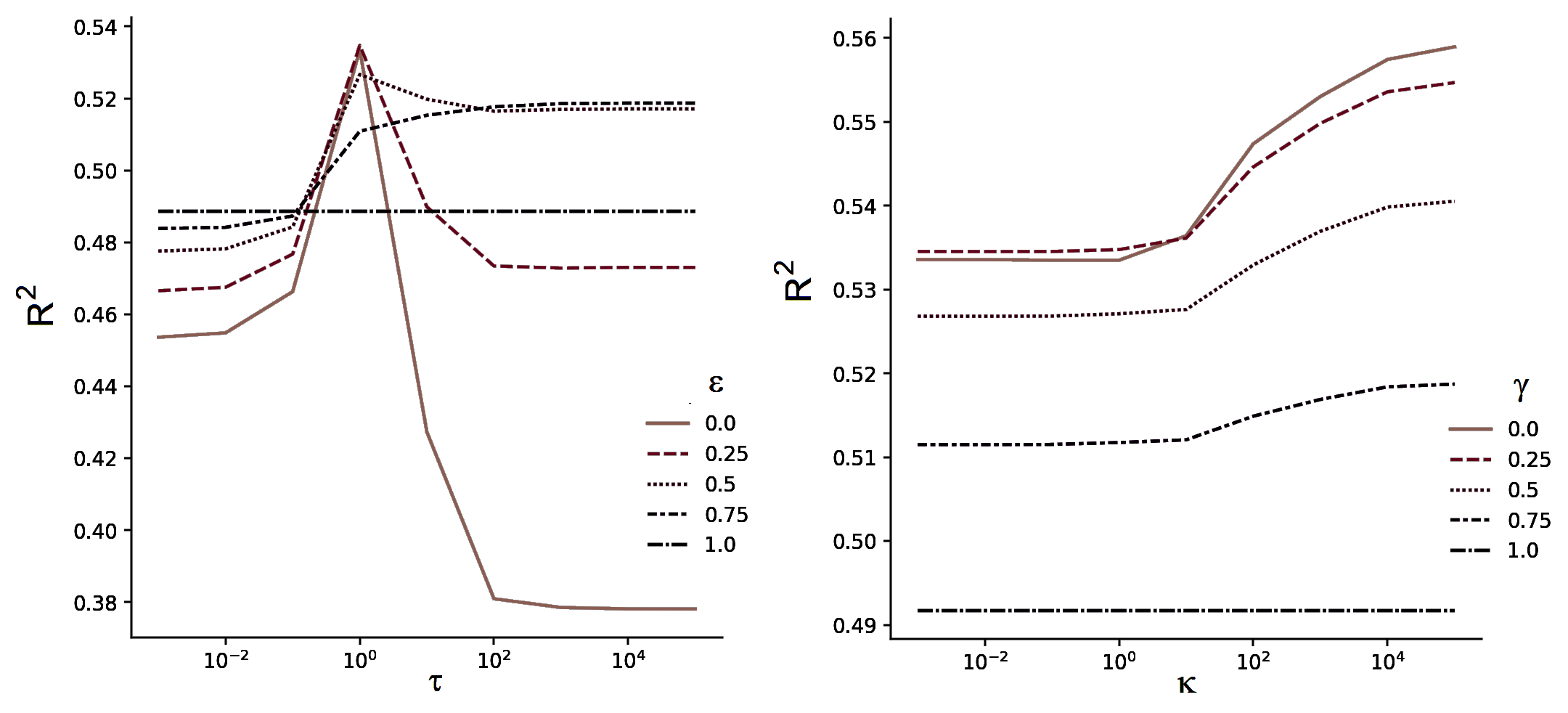}%
\caption{Measure $R^{2}$ as a function of the attention parameters $\tau$,
$\epsilon$ (left) and the self-attention parameters $\kappa$, $\gamma$ (right)
for the Sparse dataset and SAT-RF-yx using the ERT}%
\label{f:Sparse}%
\end{center}
\end{figure}
%

\begin{figure}
[ptb]
\begin{center}
\includegraphics[
height=2.4984in,
width=5.3653in
]%
{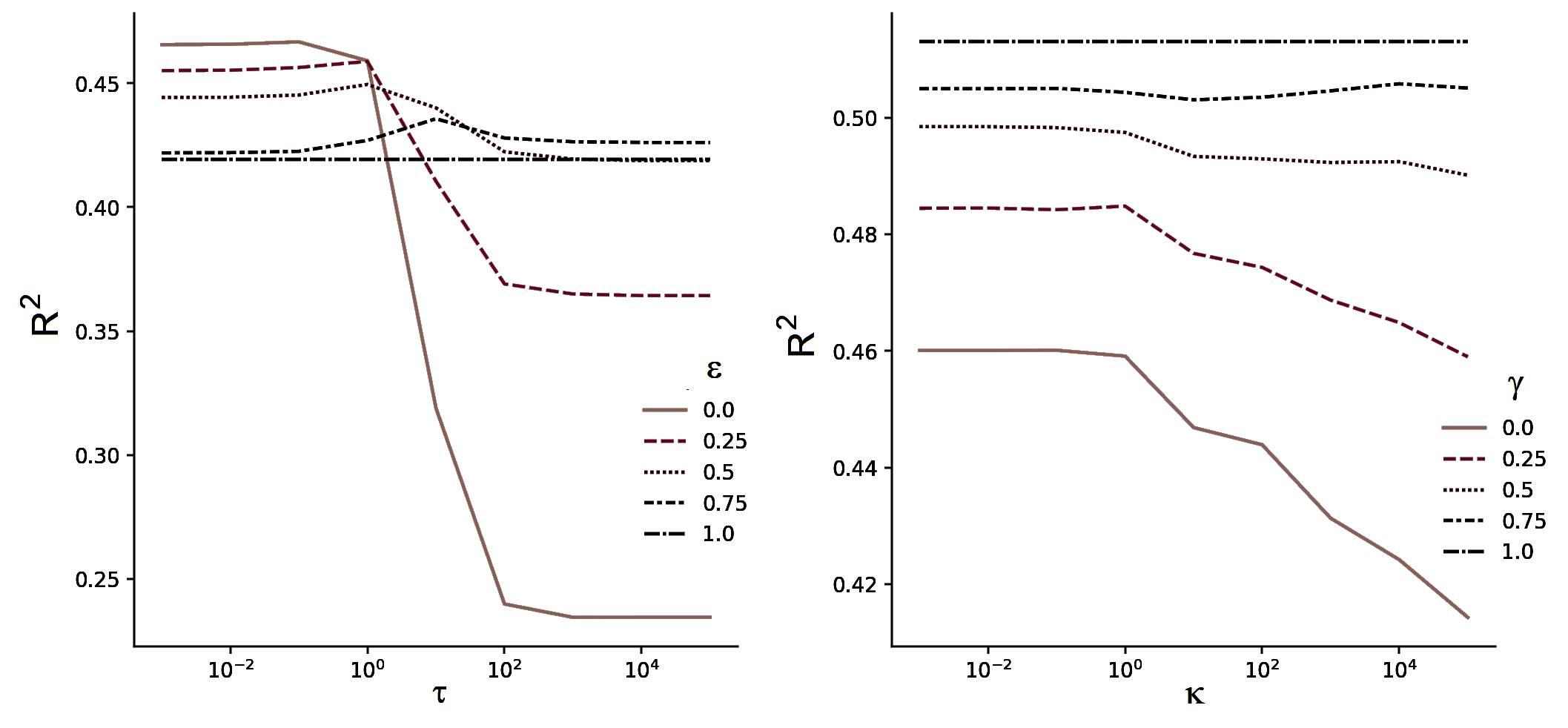}%
\caption{Measure $R^{2}$ as a function of the attention parameters $\tau$,
$\epsilon$ (left) and the self-attention parameters $\kappa$, $\gamma$ (right)
for the Fridman 1 dataset and SAT-RF-yx using the ERT}%
\label{f:Friedman1}%
\end{center}
\end{figure}
%

\begin{figure}
[ptb]
\begin{center}
\includegraphics[
height=2.5469in,
width=5.4114in
]%
{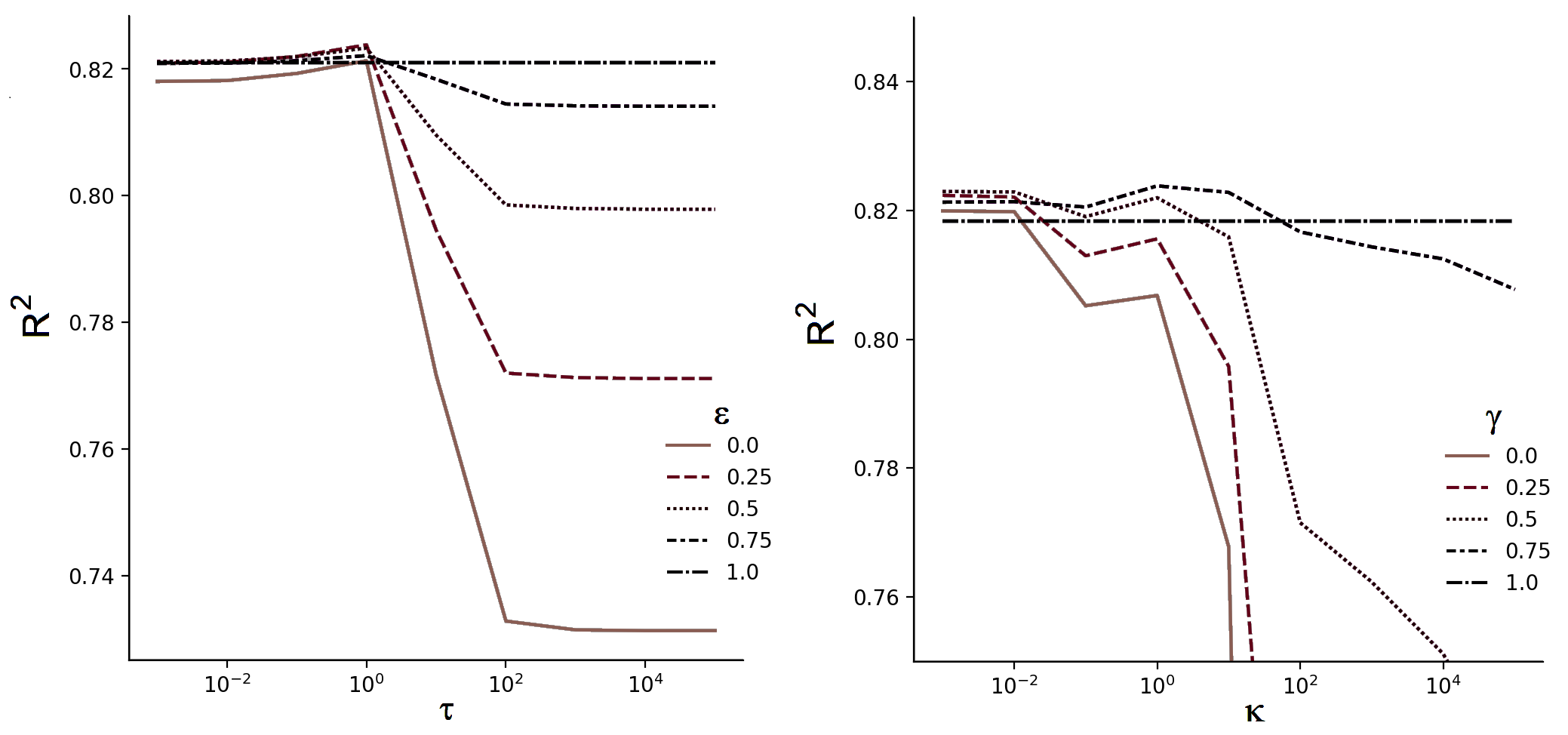}%
\caption{Measure $R^{2}$ as a function of the attention parameters $\tau$,
$\epsilon$ (left) and the self-attention parameters $\kappa$, $\gamma$ (right)
for the Boston dataset and SAT-RF-yx using the ERT}%
\label{f:boston}%
\end{center}
\end{figure}
%

\begin{figure}
[ptb]
\begin{center}
\includegraphics[
height=2.663in,
width=5.4501in
]%
{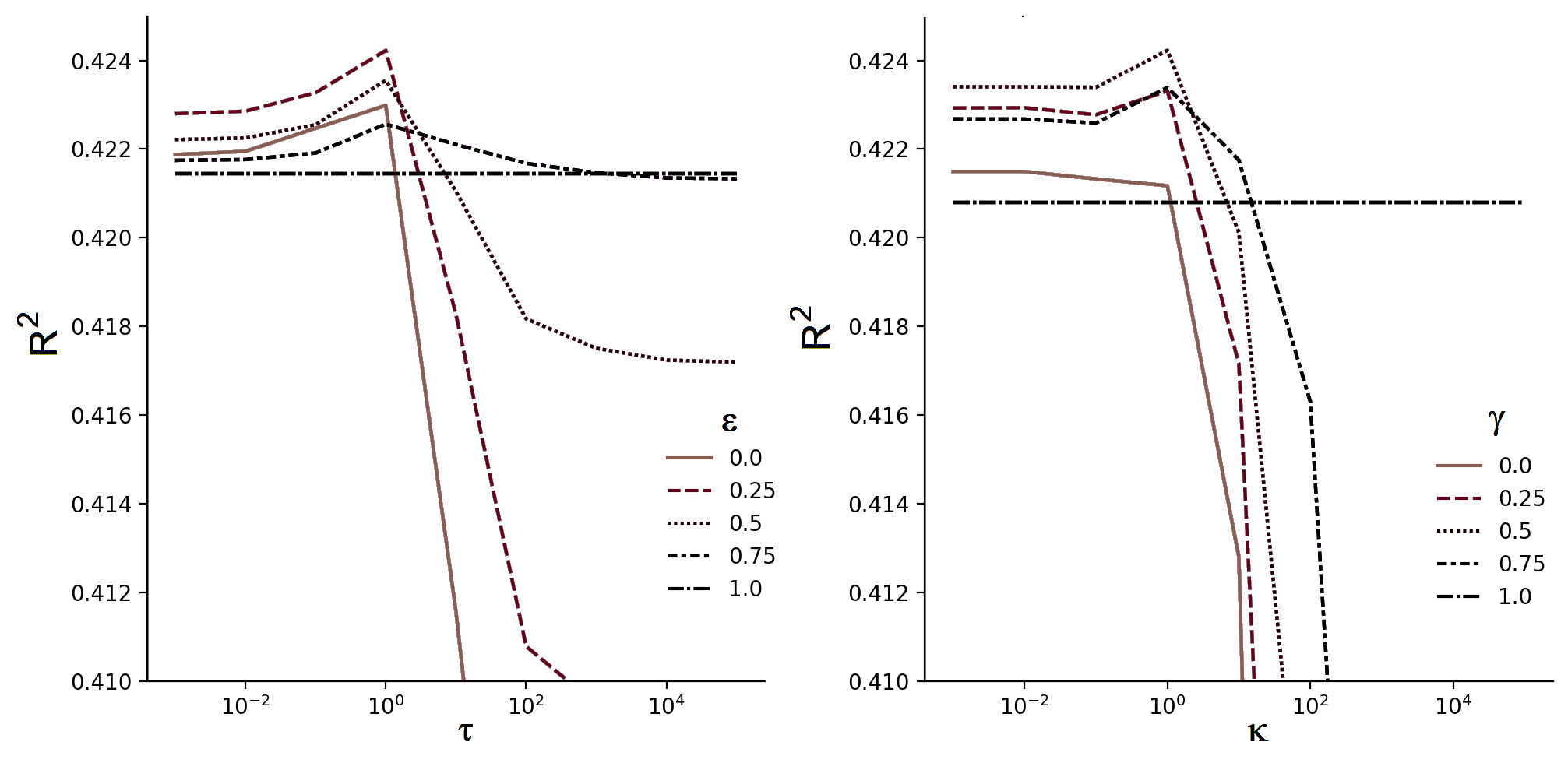}%
\caption{Measure $R^{2}$ as a function of the attention parameters $\tau$,
$\epsilon$ (left) and the self-attention parameters $\kappa$, $\gamma$ (right)
for the Wine dataset and SAT-RF-yx using the ERT}%
\label{f:wine}%
\end{center}
\end{figure}

The best results for all modifications are illustrated in Table
\ref{t:regression_compar} which aims to compare different modifications of
SAT-RF. We can see that SAT-RF-yx outperforms other models for most dataset
when RFs are used. The same cannot be concluded when ERTs are used. Indeed, we
can see from Table \ref{t:regression_compar} that SAT-RF-yx based on ERTs is
inferior other model in half of cases.%

\begin{table}[tbp] \centering
\caption{Measure $R^2$  for comparison of models (SAT-RF-y, SAT-RF-x, SAT-RF-yx) trained on regression datasets}%
\begin{tabular}
[c]{ccccccc}\hline
& \multicolumn{3}{c}{RF} & \multicolumn{3}{c}{ERT}\\\hline
Data set & SAT-RF-y & SAT-RF-x & SAT-RF-yx & SAT-RF-y & SAT-RF-x &
SAT-RF-yx\\\hline
Diabetes & $\mathbf{0.424}$ & $0.416$ & $0.422$ & $\mathbf{0.453}$ & $0.442$ &
$0.439$\\\hline
Friedman 1 & $0.470$ & $0.470$ & $\mathbf{0.489}$ & $\mathbf{0.521}$ & $0.513$
& $0.513$\\\hline
Friedman 2 & $0.878$ & $0.872$ & $\mathbf{0.882}$ & $\mathbf{0.939}$ & $0.930$
& $0.930$\\\hline
Friedman 3 & $0.682$ & $0.684$ & $\mathbf{0.685}$ & $0.739$ & $0.739$ &
$\mathbf{0.751}$\\\hline
Regression & $0.454$ & $0.483$ & $\mathbf{0.488}$ & $0.455$ & $\mathbf{0.493}$
& $0.449$\\\hline
Sparse & $0.522$ & $0.488$ & $\mathbf{0.540}$ & $0.531$ & $0.518$ &
$\mathbf{0.542}$\\\hline
Airfoil & $0.823$ & $0.820$ & $\mathbf{0.849}$ & $0.802$ & $0.802$ &
$\mathbf{0.841}$\\\hline
Boston & $0.823$ & $0.821$ & $\mathbf{0.824}$ & $0.837$ & $0.843$ &
$\mathbf{0.844}$\\\hline
Concrete & $0.857$ & $0.857$ & $\mathbf{0.866}$ & $\mathbf{0.869}$ &
$\mathbf{0.863}$ & $0.860$\\\hline
Wine & $\mathbf{0.424}$ & $0.422$ & $\mathbf{0.429}$ & $0.419$ & $0.416$ &
$\mathbf{0.447}$\\\hline
Yacht & $\mathbf{0.989}$ & $\mathbf{0.989}$ & $\mathbf{0.989}$ &
$\mathbf{0.988}$ & $\mathbf{0.988}$ & $\mathbf{0.988}$\\\hline
\end{tabular}
\label{t:regression_compar}%
\end{table}%

It should be noted that SAT-RF is an extension of ABRF under condition that
the self-attention is supplemented. Therefore, we compare results of ABRF and
SAT-RF for two cases when RFs and ERTs are used. Moreover, we compare SAT-RF
with $\epsilon$-ABRF because SAT-RF is based on this modification of ABRF. The
best results of both models are shown in Table \ref{t:ABRF_compar}. It can be
seen from Table \ref{t:ABRF_compar} that adding the self-attention module to
ABRF improves the results. To formally test whether the average difference in
the performance of two models, SAT-RF and $\epsilon$-ABRF, is significantly
different from zero, we apply the $t$-test which has been proposed and
described by Demsar \cite{Demsar-2006}. Since we use differences between
accuracy measures of SAT-RF and $\epsilon$-ABRF, then they are compared with
$0$. The $t$ statistics in this case is distributed according to the Student
distribution with $11-1$ degrees of freedom. Results of computing the $t$
statistics of the difference are p-values denoted as $p$ and the $95\%$
confidence interval for the mean $0.0085$, which are $p=0.032$ and
$[0.00088,0.016]$, respectively. The $t$-test demonstrates the outperformance
of SAT-RF in comparison with $\epsilon$-ABRF for RFs because $p<0.05$. We also
compare the same models based on ERTs. We get the $95\%$ confidence interval
for the mean $0.0127$, which are $p=0.0112$ and $[0.0036,0219]$, respectively.%

\begin{table}[tbp] \centering
\caption{Measure $R^2$  for comparison of models SAT-RF and e-ABRF trained on regression datasets}%
\begin{tabular}
[c]{ccccc}\hline
& \multicolumn{2}{c}{RF} & \multicolumn{2}{c}{ERT}\\\hline
Data set & $\epsilon$-ABRF & SAT-RF & $\epsilon$-ABRF & SAT-RF\\\hline
Diabetes & $\mathbf{0.424}$ & $\mathbf{0.424}$ & $0.441$ & $\mathbf{0.453}%
$\\\hline
Friedman 1 & $0.470$ & $\mathbf{0.489}$ & $0.513$ & $\mathbf{0.521}$\\\hline
Friedman 2 & $0.877$ & $\mathbf{0.882}$ & $0.930$ & $\mathbf{0.939}$\\\hline
Friedman 3 & $\mathbf{0.686}$ & $0.685$ & $0.739$ & $\mathbf{0.751}$\\\hline
Regression & $0.450$ & $\mathbf{0.488}$ & $0.447$ & $\mathbf{0.493}$\\\hline
Sparse & $0.529$ & $\mathbf{0.540}$ & $0.536$ & $\mathbf{0.542}$\\\hline
Airfoil & $0.843$ & $\mathbf{0.849}$ & $0.837$ & $\mathbf{0.841}$\\\hline
Boston & $0.823$ & $\mathbf{0.824}$ & $0.838$ & $\mathbf{0.844}$\\\hline
Concrete & $0.857$ & $\mathbf{0.866}$ & $0.863$ & $\mathbf{0.869}$\\\hline
Wine & $0.423$ & $\mathbf{0.429}$ & $0.416$ & $\mathbf{0.447}$\\\hline
Yacht & $\mathbf{0.989}$ & $\mathbf{0.989}$ & $\mathbf{0.988}$ &
$\mathbf{0.988}$\\\hline
\end{tabular}
\label{t:ABRF_compar}%
\end{table}%

\section{Concluding remarks}

Extensions of the attention-based RF based on joint use of the attention and
self-attention mechanisms have been proposed. The attention part plays role of
assigning weights to decision trees in the RF, and the self-attention part
tries to capture dependencies of the tree predictions and to remove noise or
anomalous predictions. They can be regarded as an alternative tool for
handling tabular data. The proposed models allow us to avoid using neural
networks and gradient-based algorithms. One of the important peculiarities of
the models is that the attention and the self-attention are learned jointly by
solving the quadratic problem with the attention and self-attention weights as
optimization variables.

Advantages of the proposed models are the following. First, the models are
simply learned. Second, in contrast to neural networks, the models have a few
hyperparameters: parameters of the Gaussian kernels (softmax operations) and
the contamination parameters of the Huber's $\epsilon$-contamination model.
Third, the attention part allows us to improve predictions and the
self-attention part allows to take into account \textquotedblleft
bad\textquotedblright\ trees and anomalous predictions. Results of numerical
experiments clearly illustrate the above. The proposed models can be extended
by adding new self-attention modules which form the multi-head self-attention.
However, this extension is rather complex from the implementation point of view.

We have to point out also disadvantages. First, the proposed models are mainly
restricted by dealing with tabular data due to the RF basis of the models.
Second, in contrast to the attention-based RF, the model has a larger number
of training parameters (weights of the attention and the self-attention). If
the number of trees in the RF is rather large, then the number of training
parameters is significantly increases. This may lead to overfitting. Third,
the advantage of the models to handle tabular data can be viewed as its
disadvantage because other types of data, for instance, images, graphs, text
data may result worse predictions.

Many numerical experiments have demonstrated the outperformance of the
proposed models. Moreover, the results have demonstrated that SAT-RFs
outperform the attention-based RFs which are the basis for the proposed
models. Due to flexibility of SAT-RFs, many modifications can be proposed and
studied, for example, various kernel functions, models of weights different
from the Huber's $\epsilon$-contamination model. Attention weights as well as
self-attention weights can be assigned to subsets of trees. This approach
allows us to partially reduce the number of training parameters. It is
interesting to develop algorithms for implementing the multi-head
self-attention. All the above ideas can be regarded as direction for further research.

\bibliographystyle{unsrt}
\bibliography{Attention,Boosting,Classif_bib,Deep_Forest,Explain,Explain_med,Imprbib,Lasso,MIL,MYBIB,MYUSE}

\end{document}